\documentclass{article}

\usepackage{arxiv}

\usepackage[utf8]{inputenc}
\usepackage[T1]{fontenc}
\usepackage{hyperref}
\usepackage{url}
\usepackage{booktabs}
\usepackage{amsmath,amssymb,amsfonts,amsthm}
\usepackage{nicefrac}
\usepackage{microtype}
\usepackage{cleveref}
\usepackage{graphicx}
\usepackage{capt-of}
\usepackage{natbib}
\usepackage{doi}

\usepackage{bm}
\usepackage{xcolor}
\usepackage{colortbl}
\usepackage{multirow}
\usepackage{makecell}
\usepackage{array}
\usepackage{mathrsfs}

\newtheorem{theorem}{Theorem}[section]

\newtheorem{assumption}[theorem]{Assumption}
\newtheorem{proposition}[theorem]{Proposition}

\newcommand{\best}[1]{\colorbox{gray!30}{\textbf{#1}}}
\newcommand{\secb}[1]{\underline{#1}}

\title{LaST-SR: Laplace-Inspired Steady--Transient Complex-Frequency Decomposition for Single Image Super-Resolution}
\author{
Linhao Li \qquad Zhaojie Pan \qquad Langkun Chen\\
School of Mathematics\\
Southeast University\\
Nanjing, China
}

\renewcommand{\shorttitle}{LaST-SR}

\hypersetup{
hidelinks,
pdftitle={LaST-SR: Laplace-Inspired Steady--Transient Complex-Frequency Decomposition for Single Image Super-Resolution},
pdfauthor={Linhao Li, Zhaojie Pan, Langkun Chen},
pdfkeywords={single-image super-resolution, Laplace neural operator, complex-frequency decomposition},
}

\begin{document}

\maketitle

\begin{abstract}
Single-image super-resolution (SISR) requires global context modeling for structurally consistent reconstruction. Fourier operators are increasingly adopted for global feature modeling. However, their periodic spectral bases constrain the representation of localized aperiodic variations, limiting the recovery of irregular structures and fine details. In dynamical systems, the Laplace neural operator extends Fourier modes to complex frequencies and decomposes the output signal into complementary steady-state and transient responses to jointly model periodic and aperiodic information. We derive, for the first time, an approximate steady--transient decomposition for two-dimensional feature maps, providing an analytical basis for the proposed complex-frequency decomposition. Accordingly, we propose LaST-SR, centered on a Complex-Frequency Decomposition module that couples a global full-spectrum Fourier branch for image-wide dependencies and long-range structural consistency with a window-conditioned local complex-frequency branch for localized, content-dependent aperiodic variations. To fuse the resulting features, we further design a Steady--Transient Collaborative Aggregation module for cross-branch interaction and joint aggregation. Experiments on five benchmarks show that LaST-SR achieves the best PSNR/SSIM among the compared methods for $\times2$ and $\times4$ SISR. Ablation studies further validate the effectiveness of the proposed architecture and its key modeling mechanisms.
\end{abstract}

\section{Introduction}

Single-image super-resolution (SISR) aims to reconstruct a high-resolution (HR) image from a degraded low-resolution (LR) observation, offering a computational alternative when direct acquisition of HR images is impractical. Since the degradation process discards or aliases high-frequency information, multiple HR images may correspond to the same LR observation, rendering SISR an inherently ill-posed inverse problem.

To constrain this ill-posed reconstruction problem, SISR models must infer missing high-frequency information by jointly exploiting local detail cues and long-range structural dependencies. CNN-based methods effectively extract local patterns, but their predominantly local operations make explicit image-wide structural modeling difficult~\cite{SRCNN,VDSR,DRCN,EDSR,RDN,CARN}. Transformer-based methods strengthen contextual interaction through self-attention, yet the quadratic cost of global attention often restricts such interaction to local windows or regions~\cite{SwinIR,CAT,NGswin,HAT,OmniSR}. Against this background, Fourier-domain operators have been increasingly adopted for efficient image-wide feature interaction through spectral-domain modeling~\cite{SwinFIR,NL-FFC,FDSR,liu2025difffno}. However, their globally supported periodic modes and spatially shared spectral modulation remain insufficiently adaptive to localized, content-dependent aperiodic variations.

Laplace-domain representation offers a principled direction for addressing this limitation. The Laplace transform generalizes the Fourier transform by extending the purely imaginary frequency $\mathrm{i}\omega$ to the complex frequency $s=\sigma+i\omega$, thereby augmenting oscillatory Fourier modes with exponential modulation. Building on this extension, the Laplace neural operator (LNO) employs a pole--residue formulation to decompose the dynamical output signal into complementary steady-state and transient responses, enabling the modeling of non-periodic signals and transient behavior beyond purely oscillatory Fourier modes~\cite{LNO}. This motivates the use of complex-frequency modeling for SISR, where exponentially modulated modes provide additional flexibility for representing localized aperiodic variations. However, this decomposition is derived for time-dependent dynamical systems, and a corresponding formulation for static two-dimensional feature maps has not yet been established.

To address this gap, we derive an approximate steady--transient decomposition for static two-dimensional feature maps based on a Laplace-domain formulation. The two responses are then interpreted as having complementary spatial roles: the steady response models image-wide dependencies and long-range structural consistency, whereas the transient response captures localized, content-dependent aperiodic variations. Here, steady and transient denote complementary spatial responses rather than temporal dynamics. Guided by this formulation, we propose the Steady--Transient Super-Resolution Network (LaST-SR). Its core Complex-Frequency Decomposition (CFD) module implements the two responses using a global full-spectrum Fourier branch and a window-conditioned local complex-frequency branch, respectively, while the Steady--Transient Collaborative Aggregation (STCA) module enables cross-branch interaction and joint aggregation. The main contributions of this work are summarized as follows:

\begin{itemize}
    \item To the best of our knowledge, we are the first to derive an approximate steady--transient decomposition for static two-dimensional feature maps, under certain assumptions. This formulation provides a spatial counterpart to the Laplace-domain pole--residue interpretation of time-dependent dynamical systems and a theoretical grounding for Laplace-inspired feature modeling in SISR.

    \item We propose LaST-SR, a framework centered on the Complex-Frequency Decomposition (CFD) module. CFD models the steady and transient responses via two complementary branches: a global full-spectrum Fourier branch for image-wide dependencies and long-range structural consistency, and a window-conditioned local complex-frequency branch for localized, content-dependent aperiodic variations. The Steady--Transient Collaborative Aggregation (STCA) module further enables cross-branch interaction and joint aggregation of the resulting steady and transient features.

    \item Extensive experiments on five standard benchmarks demonstrate that LaST-SR achieves the best PSNR/SSIM among the compared methods for both $\times2$ and $\times4$ SISR. Ablation studies further validate the proposed decomposition, the complementary roles of the two branches, and the effectiveness of STCA.
\end{itemize}

\section{Related Work}
\label{sec:related_work}

\subsection{Frequency-Domain SISR Methods}
\label{subsec:frequency_restoration}

Frequency-domain modeling has been increasingly explored in SISR to capture image-wide dependencies efficiently. SwinFIR~\cite{SwinFIR} and NL-FFC~\cite{NL-FFC} incorporate Fourier convolution to enhance global feature interaction. FSR~\cite{li2023fsr} combines transform-domain global features with local spatial features and processes different wavelet subbands using capacity-adaptive branches. FDSR~\cite{FDSR} decomposes frequency components for progressive reconstruction, whereas DiffFNO~\cite{liu2025difffno} introduces a weighted Fourier neural operator for arbitrary-scale SISR. Despite their different implementations, these methods largely rely on globally shared spectral modulation or predefined frequency decomposition, leaving localized and content-dependent aperiodic variations insufficiently modeled.

\subsection{Complex-Frequency Modeling}
\label{subsec:laplace_inspired}

Complex-exponential models extend purely oscillatory Fourier modes with exponential modulation. Classical methods, including Prony-type methods, Matrix Pencil, and ESPRIT, estimate poles and modal coefficients for signal decomposition and reconstruction~\cite{prony_ss,Matrixpencil,ESPRIT}. In neural operator learning, LNO~\cite{LNO} introduces a learnable pole--residue parameterization to represent Laplace-domain kernels and separates dynamical responses into steady-state and transient components. However, the steady-state--transient formulation in LNO is developed for temporal signals and time-dependent systems, while its adaptation to static two-dimensional feature modeling in SISR remains underexplored.

\section{Theoretical Analysis}
\label{sec:setup}

Let $\mathbf{X}\in\mathbb{R}^{B\times H\times W\times C_{\mathrm{mid}}}$ denote an intermediate feature tensor. For the $b$-th sample, we represent the discrete feature vectors $\mathbf{X}_{b,h,w,:}$ as samples of a continuous vector-valued function $\mathbf{v}_b:\Omega\to\mathbb{R}^{C_{\mathrm{mid}}}$, $\mathbf{v}_b(x_w,y_h)=\mathbf{X}_{b,h,w,:}$, where the regular grid points $(x_w,y_h)\in\Omega$, and $\Omega=[0,T_x)\times[0,T_y)$. For notational simplicity, we fix one sample and one feature channel, denote the corresponding scalar component by $v:\Omega\to\mathbb{R}$.

To characterize the global spectral structure of $v$, we adopt a truncated Fourier representation on the finite spatial domain under periodic extension:
\begin{equation}
\label{eq:fourier_main}
v(x,y)\approx\sum_{p=-P}^{P}\sum_{q=-Q}^{Q}\alpha_{pq}\exp(\mathrm{i}\omega_p^x x)\exp(\mathrm{i}\omega_q^y y),
\end{equation}
where $\alpha_{pq}\in\mathbb{C}$, $\omega_p^x=2\pi p/T_x$, and $\omega_q^y=2\pi q/T_y$. Since $v$ is real-valued, the Fourier coefficients satisfy $\alpha_{-p,-q}=\overline{\alpha_{pq}}$.

Let $u$ denote the response of a translation-invariant linear convolution operator with a learnable kernel $\kappa$ acting on $v$:
\begin{equation}
\label{eq:convolution_main}
u(x,y)=(\kappa*v)(x,y),\qquad (x,y)\in\Omega.
\end{equation}
Following the complex-exponential modeling paradigm of Prony-SS~\cite{prony_ss}, we parameterize the learnable kernel as
\begin{equation}
\label{eq:kernel_main}
\kappa(x,y)=\sum_{m=1}^{K_x}\sum_{n=1}^{K_y}\mathcal{R}_{mn}\exp\!\left(\mu_{m}^{x}x+\mu_{n}^{y}y\right),
\end{equation}
where $\mu_{m}^{x},\mu_{n}^{y}\in\mathbb{C}$ are learnable complex poles along the two spatial directions and $\mathcal{R}_{mn}\in\mathbb{C}$ is the residue coefficient associated with each pole pair. Under a positive-quadrant extension of $v$ and a first-quadrant-supported kernel $\kappa$, the two-dimensional one-sided Laplace convolution theorem gives $U(s_x,s_y)=K(s_x,s_y)V(s_x,s_y)$. This product yields four pole families after partial-fraction expansion. Retaining the Fourier--Fourier and complex-pole--complex-pole families while collecting the two mixed families into an approximation residual gives
\begin{equation}
\label{eq:decomposition_main}
\begin{split}
u(x,y)\approx{}&\sum_{p=-P}^{P}\sum_{q=-Q}^{Q}\lambda_{pq}\exp(\mathrm{i}\omega_p^x x+\mathrm{i}\omega_q^y y)\\
&+\sum_{m=1}^{K_x}\sum_{n=1}^{K_y}\gamma_{mn}\exp(\mu_{m}^{x}x+\mu_{n}^{y}y).
\end{split}
\end{equation}
The first term denotes the steady response $u_{\mathrm{ss}}(x,y)$, and the second denotes the transient response $u_{\mathrm{tr}}(x,y)$, where $\lambda_{pq}$ and $\gamma_{mn}$ are their corresponding effective coefficients.

\paragraph{Remark.}
The exact two-dimensional expansion also contains Fourier--complex-pole and complex-pole--Fourier families, which are not assumed to vanish. Equation~\eqref{eq:decomposition_main} retains the two same-type families and collects the mixed families into a residual, yielding a task-oriented dual-response approximation. See Sec.~A.2 of the technical supplement for the complete derivation.

In SISR, we assign complementary functional roles to the two retained responses. The steady response uses globally supported oscillatory modes to capture image-wide dependencies and long-range structural consistency. The transient response employs general complex-frequency modes to represent content-dependent aperiodic variations, while the windowed realization introduced below further provides local adaptation.

Under the periodic-extension convention, the periodic counterpart of Eq.~\eqref{eq:convolution_main} admits a mode-wise Fourier spectral-multiplier form with the same basis family as the steady response in Eq.~\eqref{eq:decomposition_main}. This correspondence motivates global Fourier-domain steady modeling; see Sec.~A.3 of the technical supplement.

For noncoincident Fourier and learnable poles, residue evaluation yields the coefficient of the retained transient response as
\begin{equation}
\label{eq:gamma_main}
\gamma_{mn}=\mathcal{R}_{mn}\sum_{p=-P}^{P}\sum_{q=-Q}^{Q}\frac{\alpha_{pq}}{(\mu_{m}^{x}-\mathrm{i}\omega_p^x)(\mu_{n}^{y}-\mathrm{i}\omega_q^y)}.
\end{equation}
As shown in Table~\ref{tab:transient_complexity}, direct full-map evaluation materializes large complex-valued intermediate tensors and incurs substantial peak memory. Moreover, modal coefficients derived from the full-map spectrum are shared across all spatial locations, limiting their adaptation to region-specific variations. We therefore implement the transient response within normalized local windows, sharing poles and residues across windows while deriving window-specific modal coefficients from each local spectrum.

\section{LaST-SR Architecture}
\label{sec:architecture}
\begin{figure*}[t]
    \centering
    \includegraphics[width=\textwidth]{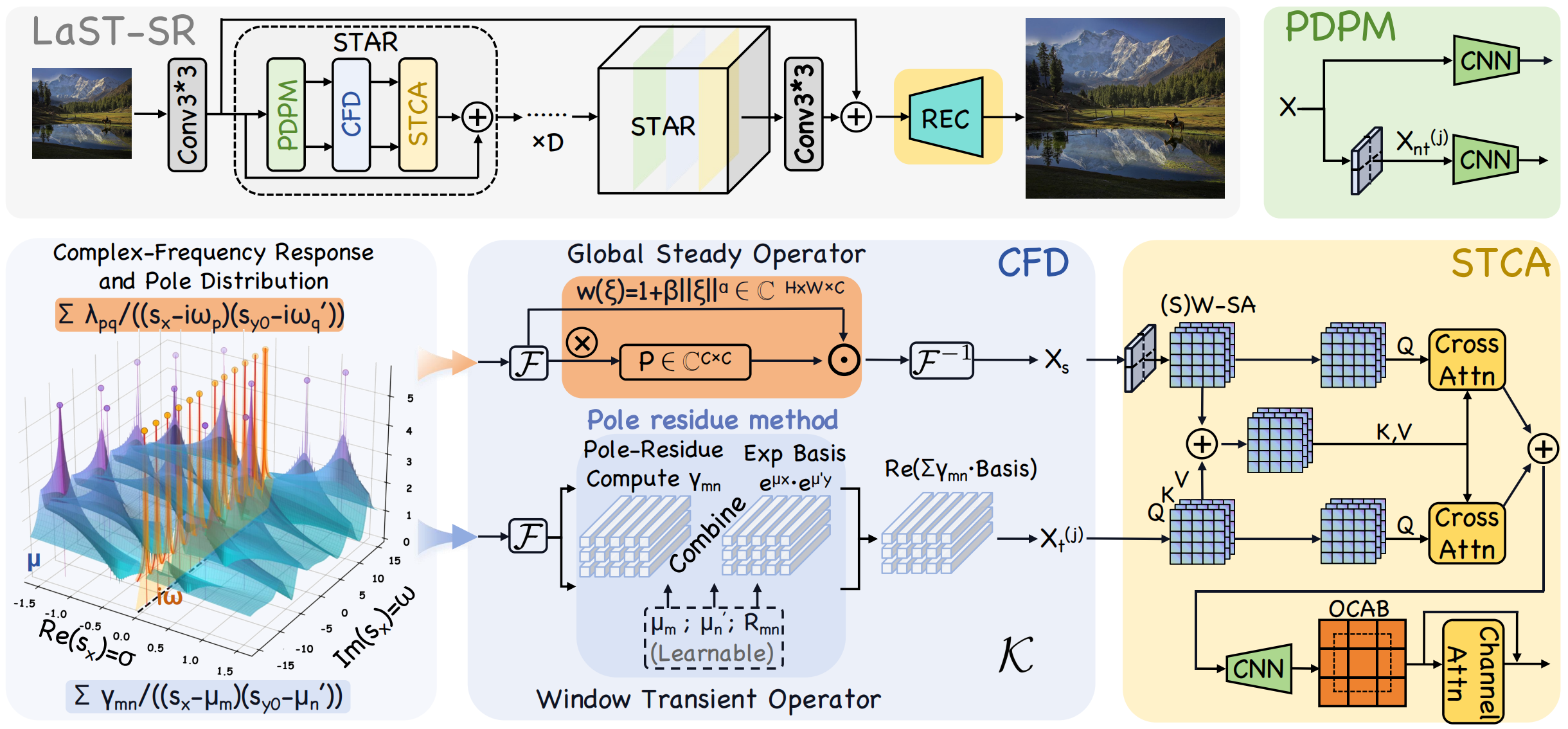}
    \caption{Overall architecture of LaST-SR. A shallow feature extractor is followed by \(D\) STAR blocks and a reconstruction module. Each STAR block comprises PDPM for branch-specific projection, CFD for global steady and windowed local transient modeling, and STCA for bidirectional cross-branch interaction and aggregation. The lower-left panel illustrates the Laplace-domain pole--residue interpretation: poles on the imaginary axis induce purely oscillatory steady responses, whereas poles with nonzero real parts yield transient responses with exponential envelopes.}
    \label{fig:visual_structure}
\end{figure*}
Guided by the approximate steady--transient decomposition, we develop LaST-SR, whose overall architecture is illustrated in Fig.~\ref{fig:visual_structure}. Its core CFD module models the two response families using a global full-spectrum Fourier branch and a window-conditioned local complex-frequency branch, respectively. PDPM provides branch-specific low-dimensional representations before CFD, while STCA enables cross-branch interaction and joint aggregation.

LaST-SR comprises shallow feature extraction, deep feature modeling with \(D\) sequential Steady--Transient Attention Representation (STAR) blocks, and high-resolution reconstruction:
\begin{equation}
\label{eq:overall_pipeline}
\begin{aligned}
X_0 &= H_{\mathrm{SF}}(I_{\mathrm{LR}}),\\
X_d &= H_{\mathrm{STAR}}^{(d)}(X_{d-1}) + X_{d-1}, \qquad d=1,\ldots,D,\\
I_{\mathrm{SR}} &= H_{\mathrm{Rec}}(X_D + X_0),
\end{aligned}
\end{equation}
where \(H_{\mathrm{SF}}\) is a \(3\times3\) convolution, \(H_{\mathrm{STAR}}^{(d)}\) denotes the \(d\)-th STAR block, and \(H_{\mathrm{Rec}}\) consists of a \(3\times3\) convolution followed by PixelShuffle. Each STAR block contains PDPM, CFD, and STCA: PDPM produces branch-specific low-dimensional features, CFD models the global steady and windowed local transient responses, and STCA enables their cross-branch interaction and joint aggregation.
\begin{table}[t]
\centering
\footnotesize
\setlength{\tabcolsep}{7pt}
\begin{tabular}{lcc}
\toprule
\textbf{Method}
& \textbf{Peak Mem. (MB)}
& \textbf{Latency (ms)}
\\
\midrule
Global
& $\geq 4{,}529.85^{\dagger}$
& OOM
\\
Windowed (Ours)
& 66.79
& 0.856
\\
\bottomrule
\end{tabular}
\caption{Operator-level memory and latency comparison between the global and windowed transient implementations under the $\times4$ SR setting, using an LR feature resolution corresponding to a 720p ($1280\times720$) output. Both implementations have identical parameter counts and leading-order arithmetic complexity. $\dagger$ The reported global memory is a conservative analytical lower bound determined by its largest complex-valued intermediate tensor, as the full-map forward pass did not complete.}
\label{tab:transient_complexity}
\end{table}

\subsection{PDPM: Pre-Decomposition Projection Module}
\label{sec:pdpm}

PDPM generates low-dimensional feature representations for the steady and transient branches. Given \(X_{\mathrm{in}}\in\mathbb{R}^{B\times HW\times C}\), an independent two-layer pointwise projection is applied to each branch \(\tau\in\{\mathrm{ss},\mathrm{tr}\}\):
\begin{equation}
\label{eq:pdpm_projection}
    Z_{\tau}=\phi\!\left(\mathcal{P}_{\tau}^{(2)}\left(\phi\!\left(\mathcal{P}_{\tau}^{(1)}\left(\mathrm{LN}(X_{\mathrm{in}})\right)\right)\right)\right),
\end{equation}
where \(\mathrm{LN}\) denotes LayerNorm, \(\phi\) denotes LeakyReLU, and \(\mathcal{P}_{\tau}^{(i)}\) denotes a pointwise channel projection implemented by a \(1\times1\) convolution. The two branches do not share parameters and produce \(Z_{\tau}\in\mathbb{R}^{B\times HW\times C_{\mathrm{mid}}}\), where \(C_{\mathrm{mid}}<C\).

The steady representation \(Z_{\mathrm{ss}}\) is reshaped into \(X_{\mathrm{ss}}\in\mathbb{R}^{B\times C_{\mathrm{mid}}\times H\times W}\) to enable global steady modeling. In parallel, \(Z_{\mathrm{tr}}\) is partitioned into non-overlapping \(M\times M\) windows, yielding \(X_{\mathrm{tr}}\in\mathbb{R}^{BN_w\times C_{\mathrm{mid}}\times M\times M}\) for local transient modeling, where \(N_w=HW/M^2\).

\subsection{CFD: Complex-Frequency Decomposition}
\label{sec:cfd}

CFD instantiates the steady and transient responses in Eq.~\eqref{eq:decomposition_main} using a global full-spectrum Fourier branch and a windowed local complex-frequency branch, respectively. 

The lower-left panel of Fig.~\ref{fig:visual_structure} visualizes a response slice by fixing \(s_y=s_{y_0}=0.15+3.80\mathrm{i}\) and varying \(s_x=\sigma+\mathrm{i}\omega\). For a complex pole \(\mu\), the inverse-transform basis is \(e^{\mu x}=e^{\operatorname{Re}(\mu)x}e^{\mathrm{i}\operatorname{Im}(\mu)x}\). Fourier poles lie on the imaginary axis and yield purely oscillatory modes, whereas learnable poles may have nonzero real parts that introduce spatial growth or decay envelopes. The surface and scatter plots show the response magnitude and pole distribution, respectively.

\subsubsection{Global Steady Branch}
\label{sec:cfd_steady}

Given $X_{\mathrm{ss}}$, the steady branch first computes $\widehat{X}_{\mathrm{ss}}(\xi)=\mathcal{F}(X_{\mathrm{ss}})(\xi)$, where $\xi$ denotes a two-dimensional discrete frequency. To preserve global spectral organization and fine-scale structural components, we retain and modulate the complete discrete spectrum. The modulated spectrum is given by
\begin{equation}
\label{eq:steady_modulation}
\widehat{X}_s(\xi)=P_{\mathrm{ss}}\!\left[w(\xi)\widehat{X}_{\mathrm{ss}}(\xi)\right],
\end{equation}
where $P_{\mathrm{ss}}\in\mathbb{C}^{C_{\mathrm{mid}}\times C_{\mathrm{mid}}}$ is a complex-valued channel-mixing matrix shared across frequencies, and $w(\xi)=1+\eta\|\xi\|^{\epsilon}$ is a frequency-dependent weight with learnable scalar $\eta$ and fixed exponent $\epsilon$. The steady feature is obtained by
\begin{equation}
\label{eq:steady_reconstruction}
X_s=\operatorname{Re}\!\left[\mathcal{F}^{-1}(\widehat{X}_s)\right].
\end{equation}

\subsubsection{Local Transient Branch}
\label{sec:cfd_transient}

The local transient branch implements the transient response in Eq.~\eqref{eq:decomposition_main} by discretizing the coefficient relation in Eq.~\eqref{eq:gamma_main} within normalized local windows. Complex poles and residue parameters are shared across windows, while modal coefficients are dynamically generated from each local spectrum. For the $r$-th window, the local Fourier coefficients are computed as
\begin{equation}
\label{eq:local_spectrum}
    \alpha^{(r)}(\omega^x,\omega^y)=\mathcal{F}(X_{\mathrm{tr}}^{(r)})(\omega^x,\omega^y),
\end{equation}
where \(\alpha^{(r)}\in\mathbb{C}^{C_{\mathrm{mid}}\times M\times M}\), and \((\omega^x,\omega^y)\) denotes the discrete frequency grid within the window.

Learnable complex poles and residues are parameterized for each input--output channel pair:
\begin{equation}
\label{eq:transient_parameters}
\begin{aligned}
    \mu^x&\in\mathbb{C}^{C_{\mathrm{mid}}\times C_{\mathrm{mid}}\times K_x},\qquad
    \mu^y\in\mathbb{C}^{C_{\mathrm{mid}}\times C_{\mathrm{mid}}\times K_y},\\
    \mathcal{R}&\in\mathbb{C}^{C_{\mathrm{mid}}\times C_{\mathrm{mid}}\times K_x\times K_y},
\end{aligned}
\end{equation}
where \(K_x\) and \(K_y\) denote the numbers of complex-frequency modes along the two spatial directions. For channel pair \((c,d)\) and mode pair \((m,n)\), the pole--residue spectral response is
\begin{equation}
\label{eq:phi_def}
    \Phi_{c,d,m,n}(\omega^x,\omega^y)=\frac{\mathcal{R}_{c,d,m,n}}{(\mu^x_{c,d,m}-\mathrm{i}\omega^x)(\mu^y_{c,d,n}-\mathrm{i}\omega^y)}.
\end{equation}
The modal coefficients for the \(r\)-th window are computed as
\begin{equation}
\label{eq:gamma_compute}
    \gamma^{(r)}_{c,d,m,n}=\sum_{\omega^x,\omega^y}\alpha^{(r)}_c(\omega^x,\omega^y)\Phi_{c,d,m,n}(\omega^x,\omega^y).
\end{equation}
The transient response in the \(r\)-th window is reconstructed over normalized local coordinates \((\tilde{x},\tilde{y})\in[0,1]^2\):
\begin{equation}
\label{eq:transient_recon}
    X^{(r)}_{t,d}(\tilde{x},\tilde{y})=\operatorname{Re}\!\left[\sum_{c,m,n}\gamma^{(r)}_{c,d,m,n}\exp\!\left(\mu^x_{c,d,m}\tilde{x}+\mu^y_{c,d,n}\tilde{y}\right)\right].
\end{equation}
The window responses are reassembled into \(X_t\in\mathbb{R}^{B\times C_{\mathrm{mid}}\times H\times W}\). The imaginary and real pole components govern local oscillation and spatial growth or decay, respectively. Since image coordinates are spatial rather than temporal, we do not impose a negative-real-part constraint. Instead, normalized local coordinates mitigate coordinate-range-induced exponential amplification.

\begin{figure*}[!t]
    \centering

    % ================= Quantitative comparison table =================
    {
    \small
    \setlength{\tabcolsep}{1.15pt}

    \begin{tabular}{@{}lccccccc@{}}
        \toprule[1.0pt]
        \multirow{2}{*}{Method}
        & \multirow{2}{*}{Scale}
        & \multirow{2}{*}{Params.}
        & Set5
        & Set14
        & BSD100
        & Urban100
        & Manga109 \\
        \cmidrule(lr){4-8}
        & & (K)
        & PSNR/SSIM
        & PSNR/SSIM
        & PSNR/SSIM
        & PSNR/SSIM
        & PSNR/SSIM \\
        \midrule

        % ================= x2 Section =================
        CARN
        & \multirow{13}{*}{$\times2$}
        & 1,592
        & 37.76/0.9590
        & 33.52/0.9166
        & 32.09/0.8978
        & 31.92/0.9256
        & 38.36/0.9754 \\

        LAPAR-A
        & & 548
        & 38.01/0.9605
        & 33.62/0.9183
        & 32.19/0.8999
        & 32.10/0.9283
        & 38.67/0.9772 \\

        DeFiANs
        & & 1,028
        & 38.03/0.9605
        & 33.63/0.9181
        & 32.32/0.8999
        & 32.20/0.9286
        & 38.91/0.9775 \\

        SwinIR-light
        & & 910
        & 38.14/0.9611
        & 33.86/0.9206
        & 32.31/0.9012
        & 32.76/0.9340
        & 39.12/0.9783 \\

        SwinIR-NG
        & & 1,181
        & 38.17/0.9612
        & 33.94/0.9205
        & 32.31/0.9013
        & 32.78/0.9340
        & 39.20/0.9781 \\

        SwinFIR-T
        & & 872
        & 38.22/0.9614
        & 34.03/\secb{0.9215}
        & 32.37/0.9020
        & 33.03/0.9362
        & 39.30/0.9785 \\

        Omni-SR
        & & 772
        & 38.22/0.9613
        & 33.98/0.9210
        & 32.36/0.9020
        & 33.05/0.9363
        & 39.28/0.9784 \\

        FDSR
        & & 6,290
        & 38.20/0.9613
        & 33.90/0.9200
        & 32.33/0.9011
        & 32.72/0.9336
        & 39.09/0.9781 \\

        SRConvNet-L
        & & 885
        & 38.14/0.9610
        & 33.81/0.9199
        & 32.28/0.9010
        & 32.59/0.9321
        & 39.22/0.9779 \\

        CRAFT
        & & 737
        & 38.23/\secb{0.9615}
        & 33.92/0.9211
        & 32.33/0.9016
        & 32.86/0.9343
        & \secb{39.39}/\best{0.9786} \\

        LAMNet-large
        & & 1,024
        & \secb{38.27}/\secb{0.9615}
        & \secb{34.07}/0.9214
        & \secb{32.38}/\secb{0.9023}
        & \secb{33.16}/\secb{0.9373}
        & 39.38/0.9785 \\

        MaIR-Small
        & & 1,355
        & 38.20/0.9611
        & 33.91/0.9209
        & 32.34/0.9016
        & 32.97/0.9359
        & 39.32/0.9779 \\

        Ours
        & & 1,329
        & \best{38.32}/\best{0.9617}
        & \best{34.13}/\best{0.9225}
        & \best{32.41}/\best{0.9026}
        & \best{33.25}/\best{0.9380}
        & \best{39.43}/\best{0.9786} \\

        \midrule

        % ================= x4 Section =================
        CARN
        & \multirow{13}{*}{$\times4$}
        & 1,592
        & 32.13/0.8937
        & 28.60/0.7806
        & 27.58/0.7349
        & 26.07/0.7837
        & 30.45/0.9073 \\

        LAPAR-A
        & & 659
        & 32.15/0.8944
        & 28.61/0.7818
        & 27.61/0.7366
        & 26.14/0.7871
        & 30.42/0.9074 \\

        DeFiANs
        & & 1,065
        & 32.16/0.8942
        & 28.63/0.7810
        & 27.58/0.7363
        & 26.10/0.7862
        & 30.59/0.9084 \\

        SwinIR-light
        & & 930
        & 32.44/0.8976
        & 28.77/0.7858
        & 27.69/0.7406
        & 26.47/0.7980
        & 30.92/0.9151 \\

        SwinIR-NG
        & & 1,201
        & 32.44/0.8980
        & 28.83/0.7870
        & 27.73/0.7418
        & 26.61/0.8010
        & 31.09/0.9161 \\

        SwinFIR-T
        & & 967
        & 32.49/0.8991
        & 28.86/0.7879
        & 27.74/0.7425
        & 26.72/0.8076
        & 31.23/0.9178 \\

        Omni-SR
        & & 792
        & 32.49/0.8988
        & 28.78/0.7859
        & 27.71/0.7415
        & 26.64/0.8018
        & 31.02/0.9151 \\

        FDSR
        & & 8,720
        & 32.56/\secb{0.8995}
        & 28.85/0.7881
        & 27.75/0.7415
        & 26.68/0.8030
        & 31.24/0.9174 \\

        SRConvNet-L
        & & 902
        & 32.44/0.8976
        & 28.77/0.7857
        & 27.69/0.7402
        & 26.47/0.7970
        & 30.96/0.9139 \\

        CRAFT
        & & 753
        & 32.52/0.8989
        & 28.85/0.7872
        & 27.72/0.7418
        & 26.56/0.7995
        & 31.18/0.9168 \\

        LAMNet-large
        & & 1,045
        & 32.60/\best{0.9000}
        & 28.87/\secb{0.7886}
        & \secb{27.77}/\secb{0.7434}
        & \secb{26.82}/\secb{0.8078}
        & 31.33/\secb{0.9183} \\

        MaIR-Small
        & & 1,374
        & \secb{32.62}/0.8992
        & \secb{28.90}/0.7882
        & \secb{27.77}/0.7431
        & 26.73/0.8049
        & \secb{31.34}/\secb{0.9183} \\

        Ours
        & & 1,516
        & \best{32.65}/\best{0.9000}
        & \best{28.93}/\best{0.7895}
        & \best{27.80}/\best{0.7440}
        & \best{26.87}/\best{0.8081}
        & \best{31.43}/\best{0.9189} \\

        \bottomrule[1.0pt]
    \end{tabular}
    }

    % ================= Table caption =================
    \begingroup
        \captionof{table}{PSNR/SSIM comparison with existing SISR methods on five benchmarks. Best and second-best results are highlighted in bold and underlined, respectively.}
        \label{tab:sr_comparison}
    \endgroup

    % ================= Qualitative comparison figure =================
    \includegraphics[width=\textwidth]{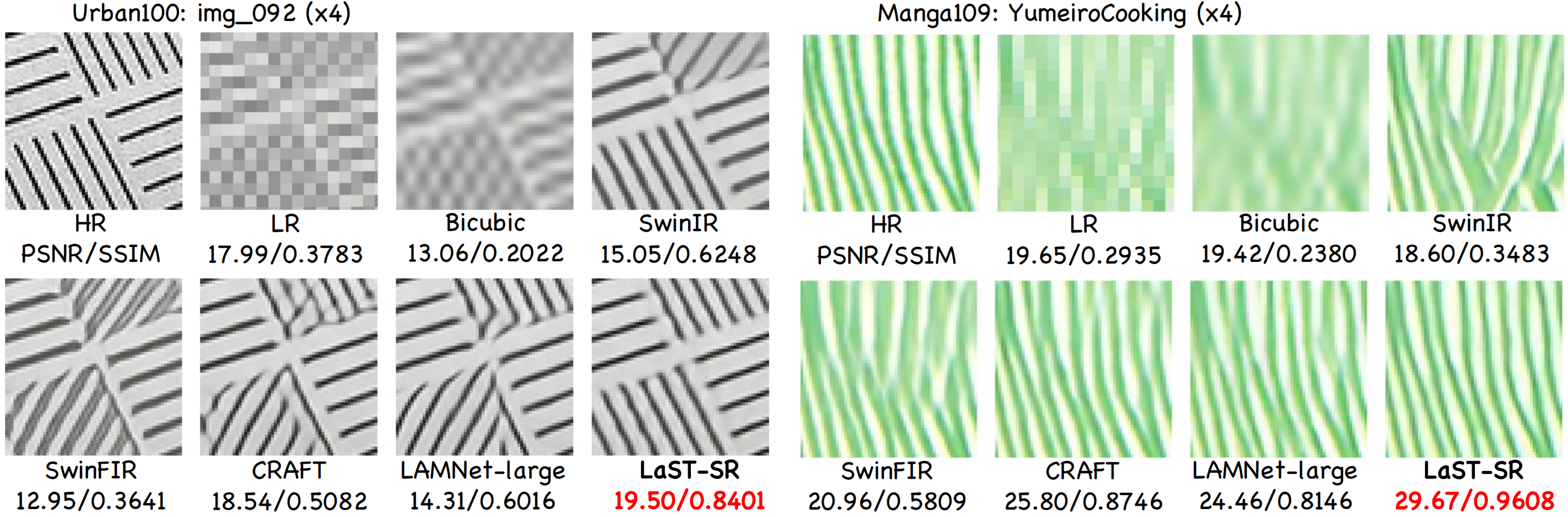}

    % ================= Figure caption =================
    \begingroup
        \caption{Qualitative comparison for \(\times4\) SISR on image 092 from Urban100 and \textit{YumeiroCooking} from Manga109. LaST-SR better preserves line continuity, local junctions, and irregular stripe patterns. Best PSNR/SSIM scores are highlighted in red.}
        \label{fig:visual_comparison}
    \endgroup

\end{figure*}

\subsection{STCA: Steady--Transient Collaborative Aggregation}
\label{sec:stca}

STCA constructs shared guidance from the steady and transient features and symmetrically refines both branches. Given the CFD outputs \(X_s\) and \(X_t\), each branch is first enhanced by window-based self-attention:
\begin{equation}
\label{eq:stca_branch_enhancement}
\begin{aligned}
    S
    &=
    X_s+
    \operatorname{WMSA}_{\delta}
    \!\left(\operatorname{LN}(X_s)\right),\\
    T
    &=
    X_t+
    \operatorname{WMSA}_{\delta}
    \!\left(\operatorname{LN}(X_t)\right).
\end{aligned}
\end{equation}
Here, \(\delta\) alternates between \(0\) and \(\lfloor M/2\rfloor\) across successive STAR blocks, corresponding to regular and shifted windows, respectively. The enhanced branches are fused to form a shared guidance feature:
\begin{equation}
\label{eq:stca_guidance}
    G
    =
    \operatorname{WMSA}_{\delta}
    \!\left(
    \mathcal{C}_{3\times3}(S+T)
    \right).
\end{equation}

Using \(S\) and \(T\) as branch-specific queries and \(G\) as the shared key and value, window-based cross-attention is performed as
\begin{equation}
\label{eq:stca_cross_attention}
\begin{aligned}
    \widehat{S}
    &=
    \operatorname{WCA}_{\delta}(S,G),\\
    \widehat{T}
    &=
    \operatorname{WCA}_{\delta}(T,G),
\end{aligned}
\end{equation}
where \(\operatorname{WCA}_{\delta}(\mathbf{U},\mathbf{Z})\) uses \(\mathbf{U}\) as the query and \(\mathbf{Z}\) as both the key and value. After feed-forward refinement, the two branches are restored to the spatial layout and aggregated:
\begin{equation}
\label{eq:stca_branch_fusion}
\begin{aligned}
    S'
    &=
    \widehat{S}
    +
    \operatorname{FFN}
    \!\left(
    \operatorname{LN}(\widehat{S})
    \right),\\
    T'
    &=
    \widehat{T}
    +
    \operatorname{FFN}
    \!\left(
    \operatorname{LN}(\widehat{T})
    \right),\\
    F
    &=
    \operatorname{WinRev}_{\delta}(S')
    +
    \operatorname{WinRev}_{\delta}(T').
\end{aligned}
\end{equation}

The fused feature is projected back to the original channel dimension:
\begin{equation}
\label{eq:stca_projection}
    X_{\mathrm{fuse}}
    =
    \phi\!\left(
    \mathcal{P}_{f}^{(2)}
    \!\left(
    \phi\!\left(
    \mathcal{P}_{f}^{(1)}(F)
    \right)
    \right)
    \right),
\end{equation}
where \(\mathcal{P}_{f}^{(1)}\) and \(\mathcal{P}_{f}^{(2)}\) are \(1\times1\) convolutions that jointly restore the channel dimension from \(C_{\mathrm{mid}}\) to \(C\). Finally, an overlapping cross-attention block (OCAB)~\cite{HAT} alleviates window-boundary effects, followed by channel-attention recalibration:
\begin{equation}
\label{eq:stca_output}
\begin{aligned}
    Y
    &=
    \operatorname{OCAB}(X_{\mathrm{fuse}}),\\
    X_{\mathrm{out}}
    &=
    Y
    +
    \operatorname{ChAttn}(Y).
\end{aligned}
\end{equation}
Through shared guidance, STCA jointly refines image-wide structural information and localized aperiodic variations.

\subsection{Training Objective}
\label{sec:objective}

We combine spatial reconstruction and Fourier-magnitude consistency losses:
\begin{equation}
\label{eq:loss_total}
\begin{aligned}
    \mathcal{L}_{\mathrm{total}}
    &=\mathcal{L}_{\mathrm{spatial}}+\lambda_f\mathcal{L}_{\mathrm{freq}},\\
    \mathcal{L}_{\mathrm{spatial}}
    &=\frac{1}{N}\left\|I_{\mathrm{SR}}-I_{\mathrm{HR}}\right\|_1,\\
    \mathcal{L}_{\mathrm{freq}}
    &=\frac{1}{N_f}\left\|
    \left|\mathcal{F}_{2D}(I_{\mathrm{SR}})\right|
    -\left|\mathcal{F}_{2D}(I_{\mathrm{HR}})\right|
    \right\|_1,
\end{aligned}
\end{equation}
where \(N\) and \(N_f\) are the numbers of spatial and frequency elements, and \(\lambda_f\) weights the frequency loss. The two terms provide complementary pixel- and spectral-domain supervision for the final reconstruction.
\begin{figure*}[t]
    \centering
    \includegraphics[width=0.95\linewidth]{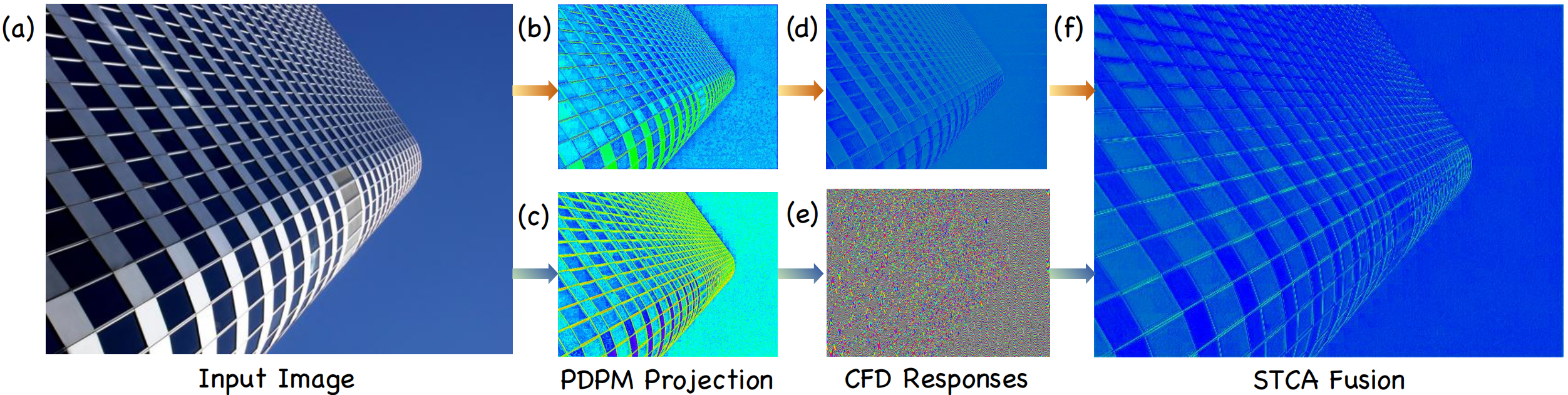}
    \caption{Visualization of intermediate representations in LaST-SR on Urban100 img\_005: (a) input image; (b,c) PDPM projections for the steady and transient branches, respectively; (d,e) the corresponding CFD responses; and (f) the STCA-fused representation.}

    \label{fig:feature_vis_005}
\end{figure*}

\begin{table}[t]
    \caption{Ablation study of the main components for \(\times2\) SISR on Urban100 and Manga109. In the corresponding variants, either the steady or transient operator is replaced with a convolutional operator, while cross-branch attention is replaced with independent self-attention. The best results are highlighted in bold.}
    \label{tab:ablation_x2}
    \centering
    \small
    \setlength{\tabcolsep}{6pt}

    \begin{tabular}{lcc}
        \toprule
        Variant
        & Urban100
        & Manga109 \\
        & PSNR/SSIM
        & PSNR/SSIM \\
        \midrule

        Steady $\rightarrow$ Conv
        & 28.12/0.8724
        & 33.25/0.9533 \\

        Transient $\rightarrow$ Conv
        & 33.14/0.9373
        & 39.35/0.9784 \\

        Cross $\rightarrow$ Self-Attn
        & 33.19/0.9376
        & 39.41/0.9784 \\

        Full model
        & \best{33.25}/\best{0.9380}
        & \best{39.43}/\best{0.9786} \\

        \bottomrule
    \end{tabular}
\end{table}

\section{Experiments}
\label{sec:experiments}

We evaluate LaST-SR under the standard bicubic degradation setting, comparing it with state-of-the-art methods and analyzing its architectural components and key modeling mechanisms.

\subsection{Experimental Settings}
\label{sec:experimental_settings}

\textbf{Datasets and metrics.} We train LaST-SR on the 800 training images of DIV2K~\cite{div2k} with random flipping and rotation. Evaluation is performed on Set5~\cite{set5}, Set14~\cite{set14}, BSD100~\cite{bsd100}, Urban100~\cite{urban100}, and Manga109~\cite{manga109}. LR images are generated by bicubic downsampling, and PSNR/SSIM are computed on the Y channel of YCbCr.

\noindent\textbf{Implementation details.} LaST-SR is implemented in PyTorch and trained on eight NVIDIA Tesla V100 GPUs with \(B=64\), \(C=72\), and \(D=6\) STAR blocks. PDPM projects each branch to \(C_{\mathrm{mid}}=8\) channels. The steady branch uses the full discrete spectrum with \(\epsilon=0.7\) and learnable \(\eta\), while the transient branch uses \(K_x=K_y=12\) complex-frequency modes. All window-based operations use \(M=16\).

We train on randomly cropped \(64\times64\) LR patches using Adam with \(\beta_1=0.9\) and \(\beta_2=0.99\). The initial learning rate is \(2\times10^{-4}\), and training lasts \(5\times10^5\) iterations. It is halved at \(3\times10^5\), \(4\times10^5\), \(4.5\times10^5\), and \(4.75\times10^5\) iterations. We set the loss weight \(\lambda_f=0.01\). FLOPs are measured on an LR input corresponding to an HR output of \(1280\times720\).

\subsection{Comparison with State-of-the-Art Methods}
\label{sec:comparison_sota}

We compare LaST-SR with representative CNN-, Transformer-, and frequency-domain SISR methods spanning comparable and larger parameter budgets, including CARN~\cite{CARN}, LAPAR-A~\cite{LAPAR-A}, DeFiANs~\cite{DeFiANs}, SwinIR-light~\cite{SwinIR}, SwinIR-NG~\cite{NGswin}, SwinFIR-T~\cite{SwinFIR}, Omni-SR~\cite{OmniSR}, FDSR~\cite{FDSR}, SRConvNet-L~\cite{SRConvNet}, CRAFT~\cite{CRAFT}, LAMNet-large~\cite{LAMNet}, and MaIR-Small~\cite{MaIR}.

As shown in Table~\ref{tab:sr_comparison}, LaST-SR achieves the best PSNR/SSIM on five benchmarks for both \(\times2\) and \(\times4\) SISR. On Urban100 and Manga109, LaST-SR improves upon the best competing results by \(0.09\) dB and \(0.04\) dB, respectively, for \(\times2\), and by \(0.05\) dB and \(0.09\) dB, respectively, for \(\times4\). Fig.~\ref{fig:visual_comparison} shows that existing methods often blur or disrupt local structures near bends, intersections, and directional changes. LaST-SR better preserves long-range structural continuity, coherent local arrangements, stripe orientations, and irregular textures across Urban100 and Manga109.

\subsection{Ablation Studies}
\label{sec:ablation}

We conduct ablations under the \(\times2\) setting to evaluate the steady and transient operators, cross-branch interaction, Fourier-magnitude loss, and pole parameterization. All variants use the same training and evaluation settings.

\noindent\textbf{Architectural components.}
We replace either the steady or transient operator with a convolutional operator and substitute independent self-attention for cross-branch attention. As shown in Table~\ref{tab:ablation_x2}, replacing global Fourier modeling causes a dramatic PSNR drop of \(5.13\) dB on Urban100 and \(6.18\) dB on Manga109, demonstrating that it is indispensable for preserving image-wide dependencies and long-range structural consistency. Similarly, ablating local complex-frequency modeling yields drops of \(0.11\) dB and \(0.08\) dB, supporting its complementary contribution to modeling localized, content-dependent aperiodic variations. Replacing cross-branch attention decreases PSNR by \(0.06\) dB and \(0.02\) dB, confirming the efficacy of collaborative aggregation.

\noindent\textbf{Key mechanisms.}
 The removal of the Fourier-magnitude loss function causes a PSNR decline of 0.12 dB across both datasets, affirming that spectral consistency complements pixel-domain supervision. Constraining the learnable transient poles to \(\operatorname{Re}(\mu)<0\) lowers Urban100 PSNR by \(0.10\) dB. Allowing unconstrained real parts permits both increasing and decaying spatial envelopes within finite windows, thereby providing greater flexibility for modeling local aperiodic variations.

\noindent\textbf{Feature responses.}
Fig.~\ref{fig:feature_vis_005} illustrates the complementary behavior of the two responses: the steady response preserves image-wide structural consistency, whereas the transient response emphasizes localized, content-dependent aperiodic variations. Their fusion through STCA further enhances local edge responses while retaining the overall structural organization.

\section{Conclusion}

We have presented LaST-SR, a Laplace-inspired steady--transient framework for jointly modeling image-wide structural information and localized aperiodic variations in SISR. Under certain assumptions, we derive an approximate steady--transient decomposition for static two-dimensional feature maps, extending the pole--residue interpretation from dynamical systems to spatial feature modeling. The proposed CFD module models the two responses through global full-spectrum Fourier and window-conditioned local complex-frequency branches, while STCA enables their collaborative aggregation. Extensive experiments and ablation studies demonstrate the effectiveness of LaST-SR. Future work will explore super-resolution under more complex degradations.

\appendix

\section{Steady--Transient Decomposition for Static Two-Dimensional Image Features}
\label{app:theory}

This section provides a detailed derivation of the approximate steady--transient decomposition introduced in the main paper. We first state the modeling conventions underlying the formulation and then derive the corresponding two-dimensional pole--residue representation.

\subsection{Continuous Feature Representation and Periodic-Extension Convention}
\label{app:problem}

Let $\mathbf{X}\in\mathbb{R}^{B\times H\times W\times C_{\mathrm{mid}}}$ denote an intermediate feature tensor. For the $b$-th sample, we choose a continuous interpolant $\mathbf{v}_b\in C(\overline{\Omega};\mathbb{R}^{C_{\mathrm{mid}}})\subset L^2(\Omega;\mathbb{R}^{C_{\mathrm{mid}}})$ satisfying
\begin{equation}
\mathbf{v}_b(x_w,y_h)=\mathbf{X}_{b,h,w,:},
\qquad
1\leq h\leq H,\quad 1\leq w\leq W,
\end{equation}
where $\Omega=[0,T_x)\times[0,T_y)$ and $(x_w,y_h)\in\Omega$ denotes a point on the regular spatial grid. This interpolant is introduced solely for continuous-domain analysis and is not part of the network implementation. Consistent with the scalar derivation in the main paper, we subsequently fix one sample and one feature channel, denote the corresponding scalar component by $v:\Omega\to\mathbb{R}$, and omit the batch and channel indices. The multi-channel formulation extends the scalar derivation by assigning complex-valued parameters to each input--output channel pair and aggregating the resulting responses over the input channels.

\begin{assumption}[Periodic-extension convention]
\label{ass:periodic_trunc}
We regard $\Omega$ as a fundamental domain of the two-dimensional torus
\begin{equation}
\mathbb{T}^2\cong\mathbb{R}^2/(T_x\mathbb{Z}\times T_y\mathbb{Z})
\end{equation}
and let $\widetilde v\in L^2(\mathbb{T}^2)$ denote the resulting periodic representative. This construction is introduced only to define a global Fourier representation. It neither assumes that the original image feature is physically periodic nor requires matching values on opposite boundaries of $\Omega$. Any boundary mismatch may introduce jump discontinuities at the periodic seams, which remain admissible in the $L^2$ setting.
\end{assumption}

For $p,q\in\mathbb{Z}$, define the Fourier coefficients of $\widetilde v$ as
\begin{equation}
\label{eq:app_alpha}
\alpha_{pq}
=
\frac{1}{T_xT_y}
\int_{0}^{T_x}
\int_{0}^{T_y}
\widetilde v(x,y)
\exp\!\left[-\mathrm{i}\left(\omega_p^x x+\omega_q^y y\right)\right]
\,dy\,dx,
\end{equation}
where
\begin{equation}
\omega_p^x=\frac{2\pi p}{T_x},
\qquad
\omega_q^y=\frac{2\pi q}{T_y}.
\end{equation}
The corresponding rectangular partial Fourier sum is
\begin{equation}
S_{P,Q}\widetilde v(x,y)
=
\sum_{p=-P}^{P}
\sum_{q=-Q}^{Q}
\alpha_{pq}
\exp\!\left(\mathrm{i}\omega_p^x x+\mathrm{i}\omega_q^y y\right).
\end{equation}
By completeness of the Fourier basis in $L^2(\mathbb{T}^2)$,
\begin{equation}
\lim_{P,Q\to\infty}
\left\|
\widetilde v-S_{P,Q}\widetilde v
\right\|_{L^2(\mathbb{T}^2)}
=0.
\end{equation}
Equivalently, for any $\varepsilon>0$, there exist $P_0,Q_0\in\mathbb{N}$ such that
\begin{equation}
\left\|
\widetilde v-S_{P,Q}\widetilde v
\right\|_{L^2(\mathbb{T}^2)}
<\varepsilon
\end{equation}
for all $P\geq P_0$ and $Q\geq Q_0$.

For the subsequent derivation, let $v_{P,Q}$ denote the restriction of $S_{P,Q}\widetilde v$ to $\Omega$. For notational simplicity, we write $v$ in place of $v_{P,Q}$. The resulting truncated representation is
\begin{equation}
\label{eq:app_fourier}
v(x,y)
=
\sum_{p=-P}^{P}
\sum_{q=-Q}^{Q}
\alpha_{pq}
\exp\!\left(\mathrm{i}\omega_p^x x+\mathrm{i}\omega_q^y y\right),
\qquad
(x,y)\in\Omega.
\end{equation}
Because the considered feature channel is real-valued, its Fourier coefficients satisfy
\begin{equation}
\alpha_{-p,-q}
=
\overline{\alpha_{pq}}.
\end{equation}

\subsection{Derivation of the Steady--Transient Decomposition}
\label{app:decomposition}

Under the truncated Fourier representation introduced above, we first consider a linear kernel integral mapping acting on the input feature $v$. For $(x,y)\in\Omega$, its general finite-domain form is
\begin{equation}
\label{eq:general_kernel_operator}
u_{\Omega}(x,y)
=
\iint_{\Omega}
\kappa\bigl((x,y),(\xi,\eta)\bigr)
v(\xi,\eta)
\,d\xi\,d\eta.
\end{equation}
Following the convolutional kernel mapping adopted in LNO~\cite{LNO}, we further restrict the kernel to depend only on the coordinate difference:
\begin{equation}
\label{eq:difference_kernel}
\kappa\bigl((x,y),(\xi,\eta)\bigr)
=
\kappa(x-\xi,y-\eta).
\end{equation}
Motivated by the complex-exponential representation of Prony-SS~\cite{prony_ss} and the Laplace-domain pole--residue parameterization of LNO~\cite{LNO}, we adopt the following positive-quadrant extension and one-sided-support convention to establish a two-dimensional one-sided Laplace representation.

\begin{assumption}[Positive-quadrant extension and one-sided support]
\label{ass:causal_extend}
Let $v$ be the truncated Fourier representation defined in the preceding subsection. Its positive-quadrant extension is
\begin{equation}
\label{eq:positive_extension}
v_{+}(x,y)
=
\mathbf{1}_{[0,\infty)^2}(x,y)
\sum_{p=-P}^{P}
\sum_{q=-Q}^{Q}
\alpha_{pq}
\exp\!\left(\mathrm{i}\omega_p^x x+\mathrm{i}\omega_q^y y\right).
\end{equation}
The translation-invariant kernel is represented by a first-quadrant-supported expansion of learnable complex-exponential modes:
\begin{equation}
\label{eq:app_kernel}
\kappa_{+}(a,b)
=
\mathbf{1}_{[0,\infty)^2}(a,b)
\sum_{m=1}^{K_x}
\sum_{n=1}^{K_y}
\mathcal{R}_{mn}
\exp\!\left(\mu_m^x a+\mu_n^y b\right),
\end{equation}
where $\mu_m^x,\mu_n^y\in\mathbb{C}$ are learnable complex poles along the two spatial directions and $\mathcal{R}_{mn}\in\mathbb{C}$ is the residue coefficient associated with each pole pair.
\end{assumption}

Under Assumption~\ref{ass:causal_extend}, the finite-domain response associated with the first-quadrant-supported kernel is
\begin{equation}
\label{eq:finite_domain_conv}
u_{\Omega}(x,y)
=
\iint_{\Omega}
\kappa_{+}(x-\xi,y-\eta)
v(\xi,\eta)
\,d\xi\,d\eta.
\end{equation}
The positive-quadrant extension and one-sided support are introduced only to establish a two-dimensional one-sided Laplace representation and do not assign temporal causality to the spatial image coordinates. For $(x,y)\in[0,\infty)^2$, define the one-sided convolution response on the positive quadrant as
\begin{equation}
\label{eq:positive_quadrant_conv}
u_{+}(x,y)
=
\int_{0}^{\infty}
\int_{0}^{\infty}
\kappa_{+}(x-\xi,y-\eta)
v_{+}(\xi,\eta)
\,d\eta\,d\xi.
\end{equation}
Because $\kappa_{+}(x-\xi,y-\eta)=0$ whenever $\xi>x$ or $\eta>y$, Eq.~\eqref{eq:positive_quadrant_conv} reduces to
\begin{equation}
\label{eq:causal_conv}
u_{+}(x,y)
=
\int_{0}^{x}
\int_{0}^{y}
\kappa_{+}(x-\xi,y-\eta)
v_{+}(\xi,\eta)
\,d\eta\,d\xi.
\end{equation}
For every $(x,y)\in\Omega$, the integration region $[0,x]\times[0,y]$ is contained in $\Omega$, and $v_{+}=v$ on this region. Consequently,
\begin{equation}
u_{\Omega}
=
u_{+}\big|_{\Omega}.
\end{equation}
We analyze $u_{+}$ on the positive quadrant below and omit the subscript ``$+$'' for notational simplicity.

For a function $f$ defined on $[0,\infty)^2$ and satisfying the corresponding exponential-order conditions, its two-dimensional one-sided Laplace transform is defined as
\begin{equation}
\mathcal{L}_2\{f\}(s_x,s_y)
=
\int_{0}^{\infty}
\int_{0}^{\infty}
f(x,y)
\exp\!\left[-(s_xx+s_yy)\right]
\,dy\,dx.
\end{equation}
The two-dimensional one-sided convolution theorem gives
\begin{equation}
\label{eq:app_laplace_product}
U(s_x,s_y)
=
K(s_x,s_y)V(s_x,s_y),
\end{equation}
where
\begin{equation}
V=\mathcal{L}_2\{v_{+}\},
\qquad
K=\mathcal{L}_2\{\kappa_{+}\},
\qquad
U=\mathcal{L}_2\{u_{+}\}.
\end{equation}
Because $v_{+}$ and $\kappa_{+}$ are finite linear combinations of complex exponentials on the positive quadrant, their Laplace transforms are
\begin{align}
V(s_x,s_y)
&=
\sum_{p=-P}^{P}
\sum_{q=-Q}^{Q}
\frac{\alpha_{pq}}
{(s_x-\mathrm{i}\omega_p^x)(s_y-\mathrm{i}\omega_q^y)},
\label{eq:app_V}\\
K(s_x,s_y)
&=
\sum_{m=1}^{K_x}
\sum_{n=1}^{K_y}
\frac{\mathcal{R}_{mn}}
{(s_x-\mu_m^x)(s_y-\mu_n^y)}.
\label{eq:app_K}
\end{align}
These expressions hold in the common region of convergence
\begin{align}
\operatorname{Re}(s_x)
&>
\max\!\left\{
0,\max_{1\leq m\leq K_x}
\operatorname{Re}(\mu_m^x)
\right\},
\label{eq:roc_x}\\
\operatorname{Re}(s_y)
&>
\max\!\left\{
0,\max_{1\leq n\leq K_y}
\operatorname{Re}(\mu_n^y)
\right\}.
\label{eq:roc_y}
\end{align}
Within this common region, the exponentially weighted absolute values of $v_{+}$ and $\kappa_{+}$ are integrable. Hence, the relevant multiple integrals are absolutely convergent, and the convolution theorem applies.

\begin{assumption}[Noncoincident Fourier and learnable poles]
\label{ass:distinct_poles}
The learnable poles do not coincide with the retained Fourier poles:
\begin{equation}
\mu_m^x\neq \mathrm{i}\omega_p^x,
\qquad
\mu_n^y\neq \mathrm{i}\omega_q^y
\end{equation}
for all valid indices. This condition ensures nonzero denominators in the subsequent partial-fraction expansion and excludes second-order poles caused by coincidences between the Fourier and learnable pole families. Pairwise distinctness among learnable poles is not required; repeated terms may be merged by summing their coefficients.
\end{assumption}

For compactness, define
\begin{equation}
d_{pm}^x
=
\mathrm{i}\omega_p^x-\mu_m^x,
\qquad
d_{qn}^y
=
\mathrm{i}\omega_q^y-\mu_n^y.
\end{equation}
\begin{proposition}[Exact four-family expansion of the truncated one-sided model]
\label{prop:exact_four_family}
Under Assumptions~\ref{ass:causal_extend} and~\ref{ass:distinct_poles}, the truncated one-sided convolution response admits, for $(x,y)\in[0,\infty)^2$, the exact expansion
\begin{align}
u(x,y)
&=
\sum_{p=-P}^{P}
\sum_{q=-Q}^{Q}
\lambda_{pq}
\exp\!\left(\mathrm{i}\omega_p^x x+\mathrm{i}\omega_q^y y\right)
\nonumber\\
&\quad+
\sum_{p=-P}^{P}
\sum_{n=1}^{K_y}
\rho_{pn}^{F\mu}
\exp\!\left(\mathrm{i}\omega_p^x x+\mu_n^y y\right)
\nonumber\\
&\quad+
\sum_{m=1}^{K_x}
\sum_{q=-Q}^{Q}
\rho_{mq}^{\mu F}
\exp\!\left(\mu_m^x x+\mathrm{i}\omega_q^y y\right)
\nonumber\\
&\quad+
\sum_{m=1}^{K_x}
\sum_{n=1}^{K_y}
\gamma_{mn}
\exp\!\left(\mu_m^x x+\mu_n^y y\right),
\label{eq:exact_spatial_response}
\end{align}
where
\begin{align}
\lambda_{pq}
&=
\alpha_{pq}
\sum_{m=1}^{K_x}
\sum_{n=1}^{K_y}
\frac{\mathcal{R}_{mn}}
{d_{pm}^x d_{qn}^y},
\label{eq:app_lambda}\\
\rho_{pn}^{F\mu}
&=
-
\sum_{q=-Q}^{Q}
\sum_{m=1}^{K_x}
\frac{\alpha_{pq}\mathcal{R}_{mn}}
{d_{pm}^x d_{qn}^y},
\label{eq:app_rho_Fmu}\\
\rho_{mq}^{\mu F}
&=
-
\sum_{p=-P}^{P}
\sum_{n=1}^{K_y}
\frac{\alpha_{pq}\mathcal{R}_{mn}}
{d_{pm}^x d_{qn}^y},
\label{eq:app_rho_muF}\\
\gamma_{mn}
&=
\mathcal{R}_{mn}
\sum_{p=-P}^{P}
\sum_{q=-Q}^{Q}
\frac{\alpha_{pq}}
{d_{pm}^x d_{qn}^y}.
\label{eq:app_gamma}
\end{align}
Moreover, the finite-domain response satisfies $u_{\Omega}=u|_{\Omega}$.
\end{proposition}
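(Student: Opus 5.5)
The plan is to verify the expansion directly in the spatial domain instead of inverting $U=KV$ through a two-variable partial-fraction expansion. Everything is a finite sum of separable exponentials, so the double integral in Eq.~\eqref{eq:causal_conv} reduces to products of elementary one-dimensional integrals. This also avoids having to justify a two-dimensional inverse Laplace transform.

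\textbf{Step 1: reduce to one term.} Substitute the finite expansions \eqref{eq:positive_extension} and \eqref{eq:app_kernel} into \eqref{eq:causal_conv} for $(x,y)\in[0,\infty)^2$. Since both sums are finite, interchange sum and integral to get
\begin{equation*}
u(x,y)=\sum_{p,q}\sum_{m,n}\alpha_{pq}\mathcal{R}_{mn}\,I^x_{pm}(x)\,I^y_{qn}(y),
\end{equation*}
where $I^x_{pm}(x)=\int_0^x e^{\mu_m^x(x-\xi)}e^{\mathrm{i}\omega_p^x\xi}\,d\xi$ and $I^y_{qn}$ is defined analogously. The factorization holds because the integrand is a product of a function of $\xi$ and a function of $\eta$ over the rectangle $[0,x]\times[0,y]$.

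\textbf{Step 2: evaluate the one-dimensional factors.} By Assumption~\ref{ass:distinct_poles}, $d_{pm}^x\neq 0$. Writing the integrand as $e^{\mu_m^x x}e^{d_{pm}^x\xi}$ gives
\begin{equation*}
I^x_{pm}(x)=\frac{e^{\mathrm{i}\omega_p^x x}-e^{\mu_m^x x}}{d_{pm}^x},
\end{equation*}
and similarly $I^y_{qn}(y)=\bigl(e^{\mathrm{i}\omega_q^y y}-e^{\mu_n^y y}\bigr)/d_{qn}^y$.

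\textbf{Step 3: expand and collect.} Multiplying the two factors produces four products, with signs $(+,-,-,+)$:
\begin{itemize}
\item The $(+)$ Fourier--Fourier product gives $e^{\mathrm{i}\omega_p^x x+\mathrm{i}\omega_q^y y}$. Summing its coefficient $\alpha_{pq}\mathcal{R}_{mn}/(d^x_{pm}d^y_{qn})$ over $(m,n)$ yields $\lambda_{pq}$ in \eqref{eq:app_lambda}.
\item The $(-)$ product $e^{\mathrm{i}\omega_p^x x}e^{\mu_n^y y}$, summed over $(q,m)$, gives $\rho^{F\mu}_{pn}$ with its minus sign, matching \eqref{eq:app_rho_Fmu}.
\item The $(-)$ product $e^{\mu_m^x x}e^{\mathrm{i}\omega_q^y y}$, summed over $(p,n)$, gives $\rho^{\mu F}_{mq}$, matching \eqref{eq:app_rho_muF}.
\item The $(+)$ product $e^{\mu_m^x x+\mu_n^y y}$, summed over $(p,q)$, gives $\gamma_{mn}$ in \eqref{eq:app_gamma}.
\end{itemize}
Coincidences among the learnable poles only merge terms with equal exponents, so the expansion remains valid as a coefficient identity. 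This is the same regrouping the paper notes after Assumption~\ref{ass:distinct_poles}.

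\textbf{Step 4: finite-domain claim and cross-check.} The statement $u_\Omega=u|_\Omega$ was already established before the proposition. For every $(x,y)\in\Omega$ the integration region $[0,x]\times[0,y]$ lies in $\Omega$, where $v_+=v$.

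As an optional cross-check, taking the Laplace transform of each family reproduces the partial-fraction decomposition of $KV$. This follows from the identity
\begin{equation*}
\frac{1}{(s-\mathrm{i}\omega)(s-\mu)}=\frac{1}{d}\left(\frac{1}{s-\mathrm{i}\omega}-\frac{1}{s-\mu}\right),\qquad d=\mathrm{i}\omega-\mu,
\end{equation*}
applied in each variable, so the result agrees with \eqref{eq:app_laplace_product}.

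I do not expect a genuine obstacle. The only delicate point is sign bookkeeping: $d$ is defined as $\mathrm{i}\omega-\mu$ rather than $\mu-\mathrm{i}\omega$. The two minus signs cancel in $\gamma_{mn}$, consistent with Eq.~\eqref{eq:gamma_main}, but they survive in the two mixed families, which is why $\rho^{F\mu}_{pn}$ and $\rho^{\mu F}_{mq}$ carry a leading minus.
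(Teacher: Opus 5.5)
Your proof is correct, but it reaches the expansion by a different route from the paper's. The paper works in the transform domain. It inserts the closed forms of $V$ and $K$ into $U=KV$, splits each factor $1/\bigl((s-\mathrm{i}\omega)(s-\mu)\bigr)$ by a one-dimensional partial fraction in $s_x$ and $s_y$, regroups into the four pole-pair families, and inverts the two-dimensional Laplace transform term by term.

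You stay in the spatial domain. Separability of the integrand on $[0,x]\times[0,y]$ reduces the convolution to the one-dimensional integrals $I^x_{pm}$ and $I^y_{qn}$. Their closed forms $\bigl(e^{\mathrm{i}\omega_p^x x}-e^{\mu_m^x x}\bigr)/d_{pm}^x$ are exactly the spatial-domain images of that partial fraction, so your coefficient bookkeeping coincides with the paper's. Your signs and summation ranges for $\lambda_{pq}$, $\rho^{F\mu}_{pn}$, $\rho^{\mu F}_{mq}$ and $\gamma_{mn}$ are right.

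Your route is more elementary and somewhat tighter:
\begin{itemize}
\item It needs no region of convergence, so there is no exponential-order bookkeeping for poles with $\operatorname{Re}(\mu)>0$, which the paper deliberately allows.
\item It needs no two-dimensional convolution theorem.
\item It needs no uniqueness theorem for the inverse Laplace transform, which the paper's term-by-term inversion tacitly relies on.
\item It gives the identity pointwise on all of $[0,\infty)^2$ by direct computation.
\end{itemize}
The paper's route, in exchange, shows the four families directly as the pole pairs of $KV$. That is the pole--residue picture behind the steady/transient terminology and behind the residue formula for $\gamma_{mn}$ quoted in the main text. Your optional Laplace-domain cross-check recovers that interpretation.
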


\begin{proof}
Substituting Eqs.~\eqref{eq:app_V} and~\eqref{eq:app_K} into Eq.~\eqref{eq:app_laplace_product} gives
\begin{equation}
\label{eq:app_product_expansion}
\begin{aligned}
U(s_x,s_y)
&=
\sum_{p=-P}^{P}
\sum_{q=-Q}^{Q}
\sum_{m=1}^{K_x}
\sum_{n=1}^{K_y}
\\[-2pt]
&\quad\times
\frac{\alpha_{pq}\mathcal{R}_{mn}}
{(s_x-\mathrm{i}\omega_p^x)(s_x-\mu_m^x)}
\frac{1}
{(s_y-\mathrm{i}\omega_q^y)(s_y-\mu_n^y)}.
\end{aligned}
\end{equation}
Assumption~\ref{ass:distinct_poles} ensures that $d_{pm}^x d_{qn}^y\neq0$. Applying one-dimensional partial fractions along both spatial directions yields
\begin{equation}
\label{eq:four_family_pf}
\begin{aligned}
&\frac{1}
{(s_x-\mathrm{i}\omega_p^x)(s_x-\mu_m^x)
 (s_y-\mathrm{i}\omega_q^y)(s_y-\mu_n^y)}=
\\
&
\frac{1}{d_{pm}^x d_{qn}^y}
\left(
\frac{1}{s_x-\mathrm{i}\omega_p^x}
-
\frac{1}{s_x-\mu_m^x}
\right)
\left(
\frac{1}{s_y-\mathrm{i}\omega_q^y}
-
\frac{1}{s_y-\mu_n^y}
\right).
\end{aligned}
\end{equation}
Expanding Eq.~\eqref{eq:four_family_pf} and regrouping terms with identical two-dimensional pole pairs gives
\begin{align}
U(s_x,s_y)
&=
\sum_{p=-P}^{P}
\sum_{q=-Q}^{Q}
\frac{\lambda_{pq}}
{(s_x-\mathrm{i}\omega_p^x)(s_y-\mathrm{i}\omega_q^y)}
\nonumber\\
&\quad+
\sum_{p=-P}^{P}
\sum_{n=1}^{K_y}
\frac{\rho_{pn}^{F\mu}}
{(s_x-\mathrm{i}\omega_p^x)(s_y-\mu_n^y)}
\nonumber\\
&\quad+
\sum_{m=1}^{K_x}
\sum_{q=-Q}^{Q}
\frac{\rho_{mq}^{\mu F}}
{(s_x-\mu_m^x)(s_y-\mathrm{i}\omega_q^y)}
\nonumber\\
&\quad+
\sum_{m=1}^{K_x}
\sum_{n=1}^{K_y}
\frac{\gamma_{mn}}
{(s_x-\mu_m^x)(s_y-\mu_n^y)},
\label{eq:exact_four_family}
\end{align}
where the coefficients are those in Eqs.~\eqref{eq:app_lambda}--\eqref{eq:app_gamma}. Since all sums are finite and the transforms share the common region of convergence in Eqs.~\eqref{eq:roc_x} and~\eqref{eq:roc_y}, the inverse Laplace transform can be applied term by term. Using
\begin{equation}
\mathcal{L}_2^{-1}
\left\{
\frac{1}{(s_x-a)(s_y-b)}
\right\}
=
\exp(ax+by)
\end{equation}
for each pole pair yields Eq.~\eqref{eq:exact_spatial_response}. The identity $u_{\Omega}=u|_{\Omega}$ follows from the preceding restriction argument, completing the proof.
\end{proof}

To obtain the dual-branch representation used in the main paper, we define the Fourier--Fourier pole family as the steady response, the complex-pole--complex-pole family as the transient response, and collect the two mixed pole families into an explicit residual:
\begin{equation}
\label{eq:exact_response_split}
u(x,y)
=
u_{\mathrm{ss}}(x,y)
+
u_{\mathrm{tr}}(x,y)
+
r_{\mathrm{mix}}(x,y),
\end{equation}
where
\begin{align}
u_{\mathrm{ss}}(x,y)
&=
\sum_{p=-P}^{P}
\sum_{q=-Q}^{Q}
\lambda_{pq}
\exp\!\left(\mathrm{i}\omega_p^x x+\mathrm{i}\omega_q^y y\right),
\label{eq:app_steady_response}\\
u_{\mathrm{tr}}(x,y)
&=
\sum_{m=1}^{K_x}
\sum_{n=1}^{K_y}
\gamma_{mn}
\exp\!\left(\mu_m^x x+\mu_n^y y\right),
\label{eq:app_transient_response}
\end{align}
and
\begin{align}
r_{\mathrm{mix}}(x,y)
&=
\sum_{p=-P}^{P}
\sum_{n=1}^{K_y}
\rho_{pn}^{F\mu}
\exp\!\left(\mathrm{i}\omega_p^x x+\mu_n^y y\right)
\nonumber\\
&\quad+
\sum_{m=1}^{K_x}
\sum_{q=-Q}^{Q}
\rho_{mq}^{\mu F}
\exp\!\left(\mu_m^x x+\mathrm{i}\omega_q^y y\right).
\label{eq:app_mixed_residual}
\end{align}
The two mixed families employ different pole types along the two spatial directions and do not generally vanish. For the SISR setting considered in this work, we focus on two complementary response families: globally supported oscillatory modes for representing repetitive structures, long-range dependencies, and overall structural consistency, and general complex-frequency modes that allow spatially varying envelopes for representing content-dependent aperiodic variations, with local adaptivity further enhanced by the subsequent windowed implementation. The Fourier--Fourier and complex-pole--complex-pole responses correspond directly to these two functional roles, whereas the mixed families describe directionally hybrid responses that use heterogeneous frequency mechanisms along the two axes. To obtain a structurally symmetric and parameter-efficient dual-branch representation aligned with these modeling objectives, we explicitly retain the two same-type pole families and collect the mixed families into $r_{\mathrm{mix}}$. Omitting this explicit residual yields the structured approximation adopted in the main paper:
\begin{equation}
\label{eq:app_approx_decomposition}
u(x,y)
\approx
u_{\mathrm{ss}}(x,y)
+
u_{\mathrm{tr}}(x,y).
\end{equation}
Here, ``steady'' and ``transient'' denote complementary spatial response families rather than temporal regimes. The derivation is carried out in the complexified feature space, and the real-valued network response is obtained by taking the real part of the resulting complex response.

\subsection{Fourier-Domain Form of the Convolution Response}
\label{app:freq_analysis}

The preceding subsection derives the pole--residue expansion of the convolution response through a two-dimensional one-sided Laplace transform. To explain why the retained steady response admits a global Fourier-domain representation, we now examine the corresponding periodic convolution operator under the periodic-extension convention. The purpose is not to rederive the steady--transient decomposition, but to show that convolution in the Fourier basis produces the same global mode family as the retained steady response.

Let $\widetilde v_{P,Q}:=S_{P,Q}\widetilde v\in L^2(\mathbb{T}^2)$ denote the truncated periodic input, and let $\widetilde\kappa\in L^1(\mathbb{T}^2)$ denote a periodic convolution kernel. Define the corresponding periodic convolution by
\begin{equation}
\label{eq:periodic_convolution}
u_{\mathrm{F}}(x,y)
=
\int_{0}^{T_x}
\int_{0}^{T_y}
\widetilde\kappa(x-\xi,y-\eta)
\widetilde v_{P,Q}(\xi,\eta)
\,d\eta\,d\xi,
\end{equation}
where the coordinate differences in $\widetilde\kappa$ are interpreted periodically. Because $\widetilde v_{P,Q}\in L^2(\mathbb{T}^2)$ and $\widetilde\kappa\in L^1(\mathbb{T}^2)$, the periodic convolution is well defined.

Let $\mathcal{F}_{\mathbb{T}^2}$ denote the Fourier-series transform on the two-dimensional torus, with the normalization
\begin{equation}
\begin{aligned}
\mathcal{F}_{\mathbb{T}^2}(f)[p,q]
&=
\frac{1}{T_xT_y}
\int_{0}^{T_x}
\int_{0}^{T_y}
f(x,y)
\\[-2pt]
&\qquad\times
\exp\!\left[-\mathrm{i}\left(\omega_p^x x+\omega_q^y y\right)\right]
\,dy\,dx.
\end{aligned}
\end{equation}
The two-dimensional periodic convolution theorem gives
\begin{equation}
\label{eq:fourier_convolution_theorem}
\mathcal{F}_{\mathbb{T}^2}(u_{\mathrm{F}})[p,q]
=
T_xT_y\,
\mathcal{F}_{\mathbb{T}^2}(\widetilde\kappa)[p,q]\,
\mathcal{F}_{\mathbb{T}^2}(\widetilde v_{P,Q})[p,q].
\end{equation}
By the orthogonality of the Fourier basis,
\begin{equation}
\mathcal{F}_{\mathbb{T}^2}(\widetilde v_{P,Q})[p,q]
=
\begin{cases}
\alpha_{pq},
&
|p|\leq P,\ |q|\leq Q,\\
0,
&
\text{otherwise}.
\end{cases}
\end{equation}
Define the effective kernel multiplier at mode $(p,q)$ as
\begin{equation}
\label{eq:fourier_kernel_multiplier}
H_{pq}
:=
T_xT_y\,
\mathcal{F}_{\mathbb{T}^2}(\widetilde\kappa)[p,q].
\end{equation}
For the retained modes, Eq.~\eqref{eq:fourier_convolution_theorem} becomes
\begin{equation}
\label{eq:fourier_coefficient_relation}
\mathcal{F}_{\mathbb{T}^2}(u_{\mathrm{F}})[p,q]
=
H_{pq}\alpha_{pq}.
\end{equation}
Applying the inverse Fourier-series transform gives
\begin{equation}
\label{eq:fourier_response_expansion}
u_{\mathrm{F}}(x,y)
=
\sum_{p=-P}^{P}
\sum_{q=-Q}^{Q}
H_{pq}\alpha_{pq}
\exp\!\left(\mathrm{i}\omega_p^x x+\mathrm{i}\omega_q^y y\right).
\end{equation}

Equation~\eqref{eq:fourier_response_expansion} shows that periodic convolution preserves the spatial Fourier basis functions and modulates only their coefficients through complex-valued mode multipliers. The retained steady response in Eq.~\eqref{eq:app_steady_response} is spanned by exactly the same global Fourier-mode family. More specifically, Eq.~\eqref{eq:app_lambda} gives
\begin{equation}
\lambda_{pq}
=
M_{pq}^{\mathrm{ss}}\alpha_{pq},
\end{equation}
where
\begin{equation}
\label{eq:steady_effective_multiplier}
M_{pq}^{\mathrm{ss}}
:=
\sum_{m=1}^{K_x}
\sum_{n=1}^{K_y}
\frac{\mathcal{R}_{mn}}
{d_{pm}^x d_{qn}^y}.
\end{equation}
Hence, the steady response itself admits the Fourier spectral-multiplier form
\begin{equation}
\label{eq:fno_general_form}
u_{\mathrm{ss}}
=
\mathcal{F}_{\mathbb{T}^2}^{-1}
\left(
\mathcal{M}_{P,Q}^{\mathrm{ss}}
\mathcal{F}_{\mathbb{T}^2}(\widetilde v_{P,Q})
\right),
\end{equation}
where $\mathcal{M}_{P,Q}^{\mathrm{ss}}$ acts mode-wise as
\begin{equation}
\left(
\mathcal{M}_{P,Q}^{\mathrm{ss}}\widehat v
\right)[p,q]
=
M_{pq}^{\mathrm{ss}}\widehat v[p,q],
\qquad
|p|\leq P,\quad |q|\leq Q.
\end{equation}
Equation~\eqref{eq:fno_general_form} has the same general form as the spectral convolution used in the Fourier neural operator~\cite{FNO}: the input is transformed to the Fourier domain, each frequency mode is modulated by a complex-valued spectral multiplier, and the result is mapped back to the spatial domain through the inverse Fourier transform. Because Fourier modes have global spatial support, this form provides a direct analytical basis for implementing the steady response with a global Fourier-domain operator.

\bibliographystyle{unsrtnat}
\bibliography{references}
\end{document}